\theoremstyle{plain}
\newtheorem{theorem}{Theorem}[section]
\newtheorem{lemma}[theorem]{Lemma}
\newtheorem{corollary}[theorem]{Corollary}
\theoremstyle{definition}
\theoremstyle{remark}
\title{Connecting adversarial attacks and optimal transport for domain adaptation}
\author{%
  Arip Asadulaev \\
  ITMO University\\
  Artificial Intelligence Research Institute\\
  \texttt{aripasadulaev@itmo.ru} \\
    \And
    Vitaly Shutov \\
  ITMO University\\
  \texttt{vitaly.shutov1@gmail.com} \\
   \And
    Alexander Korotin\\
  Skolkovo Institute of Science and Technology\\
  Artificial Intelligence Research Institute\\
  \texttt{a.korotin@skoltech.ru} \\
    \And
    Alexander Panfilov \\
  Eberhard Karls Universität Tübingen\\
  \texttt{kotekjedi@gmail.com} \\
     \And
    Andrey Filchenkov \\
  ITMO University\\
  \texttt{afilchenkov@itmo.ru} \\
}
\begin{document}
\maketitle

\everypar{\looseness=-1}

\begin{abstract}
We present a novel algorithm for domain adaptation using optimal transport. In domain adaptation, the goal is to adapt a classifier trained on the source domain samples to the target domain. In our method, we use optimal transport to map target samples to the domain named \textit{source fiction}. This domain differs from the source but is accurately classified by the source domain classifier. Our main idea is to generate a \textit{source fiction} by $c$-cyclically monotone transformation over the target domain. If samples with the same labels in two domains are $c$-cyclically monotone, the optimal transport map between these domains preserves the class-wise structure, which is the main goal of domain adaptation. To generate a \textit{source fiction} domain, we propose an algorithm that is based on our finding that adversarial attacks are a $c$-cyclically monotone transformation of the dataset. We conduct experiments on Digits and Modern Office-31 datasets and achieve improvement in performance for simple discrete optimal transport solvers for all adaptation tasks.
\end{abstract}

\section{Introduction}\label{sec-intro}
Optimal Transport (OT) is a simple framework for solving mass-moving problems for probability distributions. Optimal transport was successfully applied in mathematics~\cite{ferradans2014regularized}, economics~\cite{reich2013nonparametric}, and machine learning~\cite{arjovsky2017wasserstein, mroueh2019wasserstein, solomon2015convolutional, colombo2021automatic}, especially in domain adaptation (DA) problem~\cite{CourtyFTR15, PerrotCFH16, rakotomamonjy2020optimal}.  The discrete OT algorithms for DA are used to map the labeled source samples to the unlabeled or partially labeled samples in the target domain. 

The optimal transport results in domain adaptation depend on the cost function $c$~(\wasyparagraph\ref{sec-prelim}). The optimal transport aims to find a $c$-cyclically monotone~\cite{mccann1995existence} map; it is impossible to perturb such a map and get a more cost-efficient one~\cite{villani2008optimal}. However, a cyclically monotone map with Euclidean cost may incorrectly capture the class-wise structure between domains, leading to inaccuracy of domain adaptation by optimal transport. 

To correctly capture the class-wise structure during domain adaptation via optimal transport, we present a novel algorithm that maps the target to the domain named \textit{source fiction}. The \textit{source fiction} domain differs from the real source but it is also \textbf{classified correctly} by the source classifier. At the same time, mapping the target to the \textit{source fiction} (by OT with Euclidean cost) \textbf{saves the class-wise structure}. 

To generate the \textit{source fiction}, we propose to use the $c$-cyclically monotone transformation of the target domain. The idea is that if we transform the target by $c$-cyclically monotone map, then the corresponding sample in the \textit{source fiction} will have the same label. More details about $c$-cyclical monotonicity see in Section (\wasyparagraph\ref{sec-monotone})

As the $c$-cyclically monotone transformation that makes target samples classified correctly by the source classifier, we propose to use an adversarial attack~\cite{DBLP:journals/corr/SzegedyZSBEGF13}. We demonstrate that Iterative Fast Sign Gradient Descent (FSGD)~\cite{goodfellow2014explaining} is a $c$-cyclically monotone transformation over the dataset with quadratic cost. In other words, we replace the source domain with a new domain constructed by the gradients of the source classifier over the target samples. It is important to note that in our method, the adversarial attack turns samples into the correctly classified instead of "fooling" classifier, hence we call it \textit{inverse adversarial attack}. It is necessary to have some labeled samples in the target domain to apply an inverse adversarial attack. Therefore, our method is semi-supervised. 

In our experiments, we show that in semi-supervised settings, our method improves the accuracy of discrete OT solvers on a range of DA tasks (\wasyparagraph \ref{sec-results}). The improvement of discrete OT solvers is important, as this method has low complexity and solves DA faster than the deep neural network-based DA algorithms~\cite{GaninL15, cdan, mmd, DAN, JAN}. 

\textbf{Contribution:} We propose a novel method for semi-supervised domain adaptation with optimal transport. We prove that adversarial attacks are a $c$-cyclically monotone transformation of the dataset (\wasyparagraph \ref{sec-cmonotonicity}). Using this property, we propose a novel algorithm that improves the performance of the OT solvers on the range of DA problems (\wasyparagraph \ref{sec-experiments}).

\section{Background}
\label{sec-prelim}
\subsection{Optimal transport}

OT can be a simple solution for the domain adaptation problem. 
OT aims at finding a cost-effective mapping $T : {X} \rightarrow {Y}$ of the two probability measures $\mathbb{P}=\sum_{i=1}^{n} {a}_{i} x_{i} \text { and } \mathbb{Q}=\sum_{j=1}^{m} {b}_{j} y_{j}$ with respect to the cost function $c: {X} \times {Y} \rightarrow\mathbb{R}_{+}$, where $a_i$ and $b_i$ are the values of the Dirac function at $x_i$ and $y_j$ correspondingly. Monge's problem was the first example of the OT problem~\cite[\wasyparagraph 3] {villani2008optimal} and can be formally expressed as follows:
\begin{equation}
    \inf _{T \# \mathbb{P}=\mathbb{Q}} \int_{\Omega_{\mathbb{P}}} c(\mathbf{x}, T(\mathbf{x})) \mathbb{P}(\mathbf{x}) d \mathbf{x}
    \label{monge}
\end{equation}

Monge's formulation of OT aims at finding a map ${T}$ where ${T} \# \mathbb{P}_s=\mathbb{Q}_t$ represents the mass preserving push forward operator.
In Monge's formulation, for two given measures $\mathbb{P}$ and $\mathbb{Q}$, the existence of a transport map ${T}$ is not only non-trivial but it also may not exist~\cite[\wasyparagraph 5.1] {villani2008optimal}.

Kantorovich proposed the relaxation of the Monge's problem\ref{monge} and presented the formulation in which a solution always exists~\cite[\wasyparagraph 5.1] {villani2008optimal}. 
The Kantorovich problem aims to find a joint distribution over the $\mathbb{P}$ and the $\mathbb{Q}$ that determines how the mass is allocated.
To find an optimal solution, it is necessary to build the cost matrix for all $\mathbf{x} \in {X}$ and $\mathbf{y} \in {Y}$ samples:

\begin{equation}
    M_{{X}{Y}} \stackrel{\text { def }}{=}\left[c\left(\mathbf{x}_{i}, \mathbf{y}_{j}\right)^{p}\right]_{i j} 
\end{equation}   

Having the cost matrix $M_{{X} {Y}}$, we are searching for the optimal coupling $\mathbf{\gamma}$, that minimizes the displacement cost between two probability measures $\mathbb{P}$ and $\mathbb{Q}$  
\begin{equation}
    W_{p}^{p}(\mathbb{P}, \mathbb{Q})=\min _{{\gamma} \in U(a, b)}\left\langle {\gamma}, M_{{X} {Y}}\right\rangle
    \label{primalKant}
\end{equation}
with the constraints to the coupling ${\gamma} \in U({a}, {b})$ such that:
\begin{equation}
  U({a}, {b}) \stackrel{\text { def }}{=}\left\{\mathbf{\gamma} \in \mathbb{R}_{+}^{n \times m} \mid \mathbf{\gamma} \mathbf{1}_{m}={a}, \mathbf{\gamma}^\top \mathbf{1}_{n}={b}\right\}
\end{equation}

The infimum of this optimization problem induces the Wasserstein distance, and coupling $\gamma$ gives us a non-bijective map between probability measures $\mathbb{P}$ and $\mathbb{Q}$. 

\subsubsection{Cyclical monotonicity}
\label{sec-monotone}
The main geometric property of the OT maps is $c$-cyclical monotonicity. 
Formally the map is $c$-cyclically monotone if for all points $\mathbf{x}_0\dots \mathbf{x}_i$, $\mathbf{y}_0\dots \mathbf{y}_i$ holds: 
\begin{equation}
    \sum_{i=1}^{N} c\left(\mathbf{x}_{i}, \mathbf{y}_{i}\right) \leq \sum_{i=1}^{N} c\left(\mathbf{x}_{i}, \mathbf{y}_{i+1}\right)
\end{equation}
A $c$-cyclically monotone map cannot be improved in terms of the cost function $c$~\cite[\wasyparagraph 5] {villani2008optimal}. It is critical to have this property between two domains in DA because OT solver always \textbf{aims to find a $c$-cyclically monotone map}~\cite[\wasyparagraph 5] {villani2008optimal}. If two distribution samples $\mathbf{x}_n$ and $\mathbf{y}_n$ are $c$-cyclically monotone, and at the same time, their labels are equal for each $n$, OT can map one distribution to another while preserving a class-wise structure. In practice, when cost $c$ is Euclidean, samples in different domains with the same labels are not always the closest. This leads to the behavior when OT maps samples into the wrong classes. 

\section{Related Work}

\label{sec-related}
\textbf{Optimal transport for domain adaptation:} Optimal transport map labeled source samples in $\mathbb{Q}$ to the unlabeled or partly labeled samples in target $\mathbb{P}$. Different linear programming solvers can be used to find a solution to the mass transport problem; for example, the Dantzig simplex~\cite{nash2000dantzig} methods are actively used in various OT solvers like Earth Mover’s Distance~(EMD)~\cite{CourtyFTR15,flamary2021pot}. For differentiable OT, the Sinkhorn algorithm~\cite{Cuturi13} was proposed. The Sinkhorn is based on the matrix-vector multiplication operations and can be combined with various regularizations like group lasso regularization~(L1L2) and Laplacian regularization~(L1LP)~\cite{CourtyFTR15}. To make the OT applicable to the out-of-sample mapping, a linear OT mapping estimator (OTLin) was proposed~\cite{PerrotCFH16}. OTLin jointly computes the coupling $\mathbf{\gamma}$~\eqref{primalKant} and maps $T$ linked to the original Monge problem~\eqref{monge}. 

The Inverse OT algorithms can reconstruct the cost function that saves the underlying data structure during mapping~\cite{li2019learning}. Recently it was shown that the cost function could be approximated by the neural network and the given labels~\cite{liu2019learning}. To train the cost function, it is necessary to solve the transport problem using the Sinkhorn algorithm~\cite{Cuturi13} at every optimization step~\cite{liu2019learning}. These methods are hardly scalable and have not been applied to solve DA problems. 

The connection between OT and deep networks was proposed for the unsupervised DA~\cite{damodaran2018deepjdot} and transfer learning~\cite{li2020representation}. In DA with label and target shift problems, OT methods align probability distributions between a few domains~\cite{redko2019optimal, rakotomamonjy2020optimal}. 

\textbf{Deep domain adaptation:} Alternately, the task of DA is to make the target domain $\mathbb{P}$ samples closer to the source domain $\mathbb{Q}$ samples. In this setting, the goal is to adapt the source classifier $f_{\theta}$ to the unlabeled target domain samples $\mathbb{P}$~\cite{DavidBCKPV10, DavidLLP10, GermainHLM13}. Usually, a small number of labeled samples are available in the target domain. Standard approaches to this problem are distance-based algorithms ~\cite{mmd, DAN, JAN} or adversarial-based algorithms like DANN~\cite{GaninL15}, CDAN, CDAN-E~\cite{cdan}. These methods show high accuracy but require time-consuming computations. In comparison to the deep DA methods, OT has lower computational complexity and theoretical guarantees in domain adaptation~\cite{redko2017theoretical}. In our paper \textbf{we consider deep DA problem setting} to improve the accuracy of the source classifier $f_{\theta}$ on the target data using simple OT solvers instead of neural networks. 

\textbf{Adversarial attacks:} Previously, the various properties of adversarial examples were studied~\cite{Transfer, PapernotMG16, DBLP:journals/corr/abs-1905-02175}. Applications of adversarial examples for model accuracy improvements were also proposed~\cite{advprop, YangXLQSLL20}. The connection between OT and adversarial examples was studied in the context of robustness problems~\cite{pydi2020adversarial, bouniot2021optimal}. It has also been shown that the Sinkhorn algorithm could be used to find adversarial perturbations with respect to Wasserstein ball~\cite{DBLP:conf/icml/WongSK19}.  To the best of our knowledge, we propose the first method that connects the adversarial attacks and optimal transport for domain adaptation.

\section{Proposed method}
In this section, we present a new algorithm that generates a new domain by the inverse adversarial attack. Before introducing the algorithm, we outline motivation to use an adversarial attack to generate a \textit{source fiction} domain.

\subsection{Adversarial attacks are $c$-cyclically monotone}
\label{sec-cmonotonicity}

Adversarial examples are samples that are similar to the true samples but ``fool'' a classifier and tend to make incorrect predictions~\cite{DBLP:journals/corr/SzegedyZSBEGF13}. The phenomenon of the vulnerability of machine learning models to adversarial examples raises a great deal of concern in many learning scenarios~\cite{DBLP:conf/ccs/PapernotMGJCS17, YuanHZL19, DBLP:conf/iclr/SchottRBB19, xie2017adversarial}. With small changes described in~\ref{sec-algorithm}, the adversarial attack can turn any sample into an accurately classified one in the same way that is used to ``fool`` the classifier. In our paper, we consider the simplest adversarial attack: Fast Sign Gradient Descent (FSGD)~\cite{goodfellow2014explaining}. The iterative version of FSGD can be presented as:

\begin{equation}
    {\mathbf{x}}_{0}'={\mathbf{x}}, \quad {\mathbf{x}'}_{i+1}=\operatorname{clip}_{\mathbf{x},\varepsilon}\left\{\mathbf{x'}_{i}-\alpha \operatorname{sign}\left(\nabla_{x} L\left(\theta, {\mathbf{x}'}_{i}, \mathbf{y}\right)\right)\right\}
    \label{fsgd}
\end{equation}
With sample $\mathbf{x}$, label $\mathbf{y}$, and classifier $f_\theta$, we can obtain adversarial examples using gradient descent perturbations, maximizing the loss $L$ on a sample $\mathbf{x}$ with respect to some perturbation size $\varepsilon$. Adversarial attacks with a small parameter $\varepsilon$ are $c$-cyclically monotone maps, i.e., OT. With mild assumptions on adversarial attack we prove that it is a $c$-cyclically monotone transformation of the data w.r.t. the quadratic cost $c(\mathbf{x},\mathbf{y})=\frac{1}{2}\|\mathbf{x}-\mathbf{y}\|^{2}$. 

\begin{lemma}[cyclical monotonicity of small perturbations of a dataset.]
Let $\mathbf{x}_{1},\dots,\mathbf{x}_{N}\in\mathbb{R}^{D}$ be a dataset of $N$ distinct samples. 
Let $\mathbf{x}_{1}',\dots,\mathbf{x}_{N}'$ be its $\leq \varepsilon$-perturbation, i.e. $\|\mathbf{x}_n-\mathbf{x}_n'\|\leq \varepsilon$ for all $n=1,2,\dots,N$. Assume that $\varepsilon\leq \frac{1}{2}\min\limits_{n_1,n_2}\|\mathbf{x}_{n_1}-\mathbf{x}_{n_2}\|$, i.e. the perturbation does not exceed $\frac{1}{2}$ of the minimal pairwise distance between samples, then $\leq \varepsilon$-perturbation is $c$-cyclically monotone.
\end{lemma}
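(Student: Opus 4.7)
The plan is to reduce the claim to a clean inequality over arbitrary permutations and then exploit the quadratic structure of $c$. Specifically, since $c$-cyclical monotonicity as stated in \wasyparagraph\ref{sec-monotone} follows (via composing transpositions / decomposing permutations into cycles) from the stronger statement that for every permutation $\sigma$ of $\{1,\dots,N\}$,
\begin{equation*}
\sum_{i=1}^{N}\tfrac{1}{2}\|\mathbf{x}_i-\mathbf{x}_i'\|^{2}\leq \sum_{i=1}^{N}\tfrac{1}{2}\|\mathbf{x}_i-\mathbf{x}_{\sigma(i)}'\|^{2},
\end{equation*}
I would aim to prove this stronger inequality.

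The key step is to write $\mathbf{x}_i'=\mathbf{x}_i+\boldsymbol{\delta}_i$ with $\|\boldsymbol{\delta}_i\|\leq\varepsilon$ and expand the quadratic cost. Using the identity $\|\mathbf{x}_i-\mathbf{x}_{\sigma(i)}'\|^2=\|\mathbf{x}_i-\mathbf{x}_{\sigma(i)}\|^2-2\langle \mathbf{x}_i-\mathbf{x}_{\sigma(i)},\boldsymbol{\delta}_{\sigma(i)}\rangle+\|\boldsymbol{\delta}_{\sigma(i)}\|^2$ and noting that $\sum_i\|\boldsymbol{\delta}_{\sigma(i)}\|^2=\sum_i\|\boldsymbol{\delta}_i\|^2$ by permutation invariance, the squared-perturbation terms cancel with $\sum_i\|\mathbf{x}_i-\mathbf{x}_i'\|^2=\sum_i\|\boldsymbol{\delta}_i\|^2$. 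What remains to show is
\begin{equation*}
\sum_{i=1}^{N}\|\mathbf{x}_i-\mathbf{x}_{\sigma(i)}\|^{2}\ \geq\ 2\sum_{i=1}^{N}\langle \mathbf{x}_i-\mathbf{x}_{\sigma(i)},\boldsymbol{\delta}_{\sigma(i)}\rangle.
\end{equation*}

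To finish, I would argue termwise. The fixed points of $\sigma$ contribute zero on both sides, so consider only $i$ with $\sigma(i)\neq i$. By Cauchy--Schwarz and $\|\boldsymbol{\delta}_{\sigma(i)}\|\leq\varepsilon$, the right-hand summand is at most $2\varepsilon\|\mathbf{x}_i-\mathbf{x}_{\sigma(i)}\|$. By the separation hypothesis $\varepsilon\leq\tfrac{1}{2}\min_{n_1\neq n_2}\|\mathbf{x}_{n_1}-\mathbf{x}_{n_2}\|$, we have $\|\mathbf{x}_i-\mathbf{x}_{\sigma(i)}\|\geq 2\varepsilon$, hence $\|\mathbf{x}_i-\mathbf{x}_{\sigma(i)}\|^{2}\geq 2\varepsilon\|\mathbf{x}_i-\mathbf{x}_{\sigma(i)}\|$, which dominates the bound just obtained. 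Summing over $i$ gives the desired inequality.

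I expect the only real subtlety to be bookkeeping between the two common definitions of $c$-cyclical monotonicity (cyclic shifts versus arbitrary permutations); proving it for arbitrary permutations, as above, is in fact cleaner and immediately implies the cyclic-shift version used in \wasyparagraph\ref{sec-monotone}. The use of the separation hypothesis is tight in the sense that without the factor $\tfrac{1}{2}$ the termwise bound $\|\mathbf{x}_i-\mathbf{x}_{\sigma(i)}\|^{2}\geq 2\varepsilon\|\mathbf{x}_i-\mathbf{x}_{\sigma(i)}\|$ could fail, so this appears to be the one place where the hypothesis is genuinely needed.
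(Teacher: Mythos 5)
Your proof is correct, but it runs on a different engine than the paper's. The paper's argument (Appendix, Lemma~\ref{lemma-main}) never expands the square: it applies the reverse triangle inequality to get $\|\mathbf{x}_{n_k}-\mathbf{x}_{n_{k+1}}'\|\geq \|\mathbf{x}_{n_k}-\mathbf{x}_{n_{k+1}}\|-\|\mathbf{x}_{n_{k+1}}-\mathbf{x}_{n_{k+1}}'\|\geq 2\varepsilon-\varepsilon=\varepsilon$, so the cross sum is at least $K\varepsilon^2$, while each diagonal term is trivially at most $\varepsilon^2$, and the two sums are compared through this common benchmark. You instead exploit the Hilbert-space structure of the quadratic cost: write $\mathbf{x}_i'=\mathbf{x}_i+\boldsymbol{\delta}_i$, cancel the $\|\boldsymbol{\delta}_i\|^2$ terms by permutation invariance, and dominate the remaining inner-product term via Cauchy--Schwarz and the separation bound $\|\mathbf{x}_i-\mathbf{x}_{\sigma(i)}\|\geq 2\varepsilon$. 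Both proofs use the hypothesis with exactly the same strength, and both are termwise at heart. What the paper's route buys is generality in the cost: since it only compares distances and then applies a monotone function, it works verbatim for any cost of the form $c(\mathbf{x},\mathbf{y})=h(\|\mathbf{x}-\mathbf{y}\|)$ with $h$ nondecreasing (e.g. $\|\mathbf{x}-\mathbf{y}\|^p$), whereas your expansion is tied to $c(\mathbf{x},\mathbf{y})=\tfrac12\|\mathbf{x}-\mathbf{y}\|^2$. What your route buys is the arbitrary-permutation formulation in one shot and a clear view of exactly where the factor $\tfrac12$ in the hypothesis is needed. One small bookkeeping remark: the implication you actually need goes from the permutation form to the cyclic form on subsets, which follows simply by extending the cycle to a permutation of $\{1,\dots,N\}$ by the identity (fixed points contribute zero on both sides); the cycle-decomposition argument you mention is the converse direction and is not required here.
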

Adversarial attacks are small $\varepsilon$ perturbations of the dataset samples, we immediately obtain:
\begin{corollary}
\label{corollary-main}
Let $x_{1},\dots,x_{N}\in\mathbb{R}^{D}$ be a dataset of $N$ distinct samples. Then any adversarial attack $\mathbf{x}_{n}\mapsto \mathbf{x}_{n}'$ on the dataset with $\varepsilon\leq \frac{1}{2}\min\limits_{n_1,n_2}\|\mathbf{x}_{n_1}-\mathbf{x}_{n_2}\|$ is $c$-cyclically monotone.
\end{corollary}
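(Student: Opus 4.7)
My plan is to reduce $c$-cyclical monotonicity to a direct computation with the quadratic cost. By the standard definition, I need to show that for every permutation $\sigma$ of $\{1,\dots,N\}$,
\begin{equation*}
\sum_{i=1}^N \tfrac{1}{2}\|\mathbf{x}_i - \mathbf{x}_i'\|^2 \;\leq\; \sum_{i=1}^N \tfrac{1}{2}\|\mathbf{x}_i - \mathbf{x}_{\sigma(i)}'\|^2,
\end{equation*}
which is equivalent to the cycle form because any permutation decomposes into disjoint cycles.

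First, I would write $\mathbf{x}_i' = \mathbf{x}_i + \mathbf{e}_i$ with $\|\mathbf{e}_i\|\leq \varepsilon$. Expanding the squared norm on the right and using that $\sigma$ is a permutation (so $\sum_i \|\mathbf{e}_{\sigma(i)}\|^2 = \sum_i \|\mathbf{e}_i\|^2$), the perturbation-norm terms cancel and I obtain the identity
\begin{equation*}
\sum_{i=1}^N \|\mathbf{x}_i - \mathbf{x}_{\sigma(i)}'\|^2 \;-\; \sum_{i=1}^N \|\mathbf{x}_i - \mathbf{x}_i'\|^2 \;=\; \sum_{i=1}^N \|\mathbf{x}_i - \mathbf{x}_{\sigma(i)}\|^2 \;-\; 2\sum_{i=1}^N \langle \mathbf{x}_i - \mathbf{x}_{\sigma(i)},\, \mathbf{e}_{\sigma(i)}\rangle.
\end{equation*}

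Next, I would argue the right-hand side is non-negative term by term. Indices with $\sigma(i)=i$ contribute zero. For $\sigma(i)\neq i$, set $d_i = \|\mathbf{x}_i - \mathbf{x}_{\sigma(i)}\|$; by the pairwise distance hypothesis $d_i \geq 2\varepsilon$. Cauchy--Schwarz gives $|\langle \mathbf{x}_i - \mathbf{x}_{\sigma(i)}, \mathbf{e}_{\sigma(i)}\rangle| \leq d_i\varepsilon$, so each index contributes at least $d_i^2 - 2\varepsilon d_i = d_i(d_i - 2\varepsilon)\geq 0$. Summing over all $i$ yields the desired inequality, proving $c$-cyclical monotonicity.

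The only place where the constant in the hypothesis is used is the final step, where the cross term from Cauchy--Schwarz has coefficient $2\varepsilon$ and must be absorbed by the quadratic term $d_i^2$; the bound $\varepsilon \leq \tfrac{1}{2}\min_{n_1\neq n_2}\|\mathbf{x}_{n_1}-\mathbf{x}_{n_2}\|$ is exactly what makes this absorption tight. I do not anticipate any serious obstacle — the proof is essentially a squared-norm expansion together with the permutation cancellation of $\|\mathbf{e}_i\|^2$ terms — but it is worth being explicit that the formula holds for arbitrary permutations, not just cyclic ones, so that the conclusion matches the standard $c$-cyclical monotonicity definition.
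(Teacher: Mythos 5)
Your proof is correct, and it takes a somewhat different route than the paper. The paper (Lemma \ref{lemma-main} in the appendix) argues via two crude one-sided bounds compared against a common quantity: the reverse triangle inequality gives $\|x_{n_k}-x_{n_{k+1}}'\|\geq 2\varepsilon-\varepsilon=\varepsilon$ for each link of the cycle, so the shifted cost is at least $K\varepsilon^{2}/2$, while the diagonal cost is at most $K\varepsilon^{2}/2$ because each perturbation has norm $\leq\varepsilon$; comparing the two sums finishes the proof. You instead expand the quadratic cost exactly, cancel the $\|\mathbf{e}_i\|^{2}$ terms using permutation invariance, and control the cross term by Cauchy--Schwarz, obtaining the nonnegative excess $\sum_{i:\sigma(i)\neq i} d_i(d_i-2\varepsilon)$ (up to the factor $\tfrac12$). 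Both arguments consume the hypothesis $\varepsilon\leq\tfrac12\min_{n_1\neq n_2}\|\mathbf{x}_{n_1}-\mathbf{x}_{n_2}\|$ in the same way, by absorbing a $2\varepsilon$ against the minimal pairwise distance, but your version buys two small extras: it yields an explicit quantitative gap rather than just the inequality, and by working with general permutations and treating fixed points ($\sigma(i)=i$) separately it handles the degenerate links (where consecutive indices of the cycle coincide) that the paper's triangle-inequality step silently assumes away. Conversely, the paper's proof is shorter and needs no inner products, only norms, so it would apply verbatim to any cost of the form $h(\|x-y\|)$ with $h$ increasing, whereas your expansion is tied to the quadratic cost --- which is the cost actually used in the paper, so this is not a loss here. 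One stylistic note: the statement you were given is a corollary of the paper's lemma (the attack is just a particular $\leq\varepsilon$-perturbation), and you have effectively re-proved the lemma itself; that is fine for a blind proof, but you could also have cited the perturbation lemma and only checked that an FSGD attack with clipping radius $\varepsilon$ satisfies $\|\mathbf{x}_n-\mathbf{x}_n'\|\leq\varepsilon$.
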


The proof of lemma is in Appendix~(\wasyparagraph\ref{lemma-main}). Corollary \ref{corollary-main} suggests to use the \textit{optimal} map to transform target domain $\mathbb{Q}$ to domain $\mathbb{P}'$ formed by the $c$-cyclically monotone \textit{inverse adversarial attack}.

\subsection{Domain adaptation with \textit{source fiction}}
\label{sec-algorithm}

\begin{figure*}[h!]
 \centering
 \includegraphics[width=\textwidth]{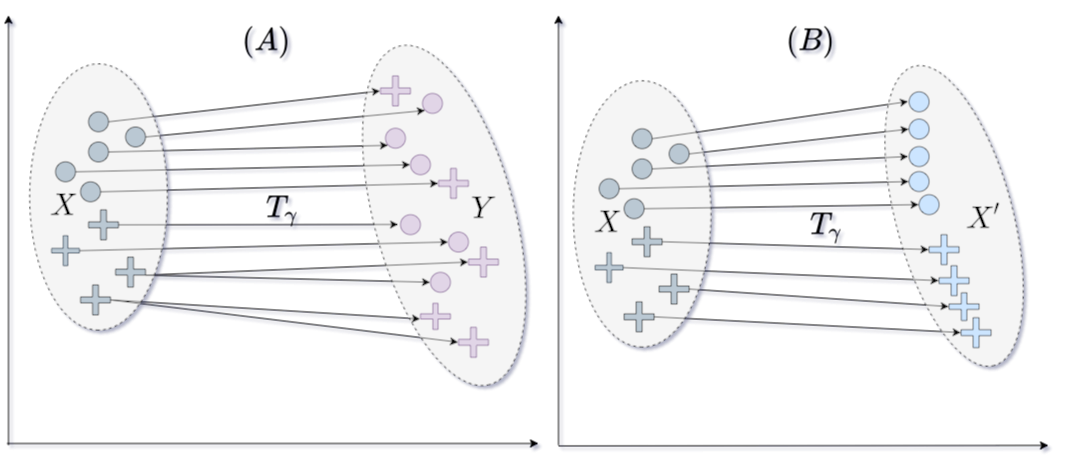}
 \label{fig:cmot}
 \caption{Illustration of the difference between two settings. (A) Optimal transport $T_\gamma$ between the target $X$ and source $Y$ domain samples. (B) Optimal transport mapping between target and the proposed \textit{source fiction} domain samples $X'$. The figure illustrates the example with the two classes in each domain (circles and crosses). By the lines, connected the "closest w.r.t the Euclidean distance" samples between the two domains. In (A), the samples from the same class in the two domains are not always closest. In (B), samples in the target domain are the closest to the corresponding class sample in the \textit{source fiction}.}
\end{figure*} 

\begin{algorithm}[H]
   \label{alg:algo}
\begin{algorithmic}
\STATE {\bfseries Input:} Classifier $f_{\theta}$, OT algorithm $T$,
 source samples ${Y}$, labeled ${X}_l$ and all target samples ${X}$, perturbations size $\varepsilon$
\STATE {\bfseries Initialize:} $\varepsilon \leq \frac{1}{2}\min\limits_{n_1,n_2}\|\mathbf{x}_{n_1}-\mathbf{x}_{n_2}\|$ for $\mathbf{x}_n$ in labeled $X_{l}$
\STATE Pretrain classifier $f_{\theta}$ on source
\STATE ${Y'}\leftarrow \emptyset$
    \FOR{ $\mathbf{x}, \mathbf{y} \in X_{l}$}
        \STATE $\mathbf{x'}\leftarrow\mathbf{x}$ 
        \FOR{some iterations}
            \STATE $\mathbf{x'}\leftarrow\operatorname{clip}_{\mathbf{x}, \varepsilon}\left\{\mathbf{x}_{n}'-\alpha \operatorname{sign}\left(\nabla_{x} L\left(\theta, \mathbf{x'}, \mathbf{y}\right)\right)\right\}$
        \ENDFOR
        \STATE Add $\mathbf{x}'$ to ${X'}$ 
\ENDFOR
\STATE Find a map ${X} \rightarrow {X'}$ using $T$.
\STATE Apply the classifier $f_{\theta}$ to the output of $T$.
\RETURN Trained classifier $f_{\theta}$ and map $T$.
\end{algorithmic}
\caption{Algorithm for DA with \textit{source fiction}}
\end{algorithm}

As we proved, adversarial attacks are $c$-cyclically monotone transformation over the dataset~\ref{corollary-main}.To manage this property, we replace the source with the distribution that is cyclically monotone to the target distribution and is, at the same time, accurately classified by a source classifier. 

In our method, we use a targeted FSGD adversarial attack (Eq.~\eqref{fsgd}) to source classifier $f_{\theta}$, with target label $\mathbf{y}$ equal to the true class of the given sample. Such attack adds to the image features of the class it really belongs to~\cite{DBLP:journals/corr/abs-1905-02175}. By inverse adversarial attack we obtain a new distribution $\mathbb{P}'$ with samples ${X}\subset \mathbb{R}^{D}$. Following corollary~\ref{corollary-main} to obtain monotonicity, we set the size of perturbation $\varepsilon \leq \frac{1}{2}\min\limits_{n_1,n_2}\|\mathbf{x}_{n_1}-\mathbf{x}_{n_2}\|$ for all $\mathbf{x}_n$ in ${X}_{l}$.

For each labeled sample $X_l$ in the target distribution, we obtain a corresponding samples ${X}'$. While ${X}'$ is a cyclically monotone transformation of the target, we have a low quadratic cost between each target sample ${X_l}$ and its corresponding sample in ${X}'$. To apply adaptation of classifier $f_{\theta}$ on the target distribution samples without labels, we use OT to find a map between all samples from the distribution ${X}$ and ${X}'$. The pipeline of OT for DA with samples ${X}'$ is presented in Algorithm~\ref{alg:algo}.

\section{Experiments}
\label{sec-experiments}

In this section, we test our method on two types of datasets (\wasyparagraph \ref{sec-datasets}). The experiments aim to demonstrate that our method improves the performance of fundamental discrete OT algorithms. Besides, we compare different deep DA baselines \ref{sec-baselines}. Discussion on empirical complexity is presented in Section (\wasyparagraph \ref{sec-complexity}). Additionally, we conduct an ablation study on the $\varepsilon$ parameter to show the stability of our method in different settings of the inverse adversarial attack. The code is written in \textit{PyTorch} framework and will be made public. We give more training details (pre-processing, architectures, etc.) in Appendix (\wasyparagraph\ref{sec-details}). 

\subsection{Datasets}
\label{sec-datasets}
\textbf{Digits:} We evaluated our method on Digits datasets MNIST~\cite{mnisthandwrittendigit}, USPS~\cite{uspsdataset}, SVNH~\cite{netzer2011reading}, and MNIST-M~\cite{GaninL15}. Each dataset consists of 10 classes of digit images with different numbers of train and test samples. We resized the images to the ($32\times 32$) pixel size as a pre-processing step.

\textbf{Modern Office-31:} Besides the Digits dataset that consists of only ten classes in each domain, we tested our method on the Modern Office-31 dataset~\cite{ringwald2021adaptiope} with 31 classes per domain. The Modern Office-31 dataset is one of the most extensive and diverse datasets for DA. The dataset consists of three domains: Amazon (A), Synthetic (S), and Webcam (W). In comparison to the Digits and original Office-31 dataset~\cite{saenko2010adapting}, this dataset includes synthetic $\rightarrow$ real task, which is problematic. Additionally, we included the DLSR (D) domain from the original Office-31 to estimate our algorithm properly, see the results in Appendix (\wasyparagraph \ref{sec-details}). 

\subsection{Baselines}
\label{sec-baselines}
\textbf{Deep domain adaptation algorithms:} We considered the range of fundamental gradient-based DA techniques. First of all, we compared our method to the prominent adversarial-based approach DANN~\cite{GaninL15}, CDAN, CDAN-E~\cite{cdan}. We also tested the Maximum Mean Discrepancy (MMD)~\cite{mmd} based DA techniques like DAN~\cite{DAN} and JAN~\cite{JAN} . Additionally, we considered the Wasserstein distance-based method WDGRL~\cite{shen2018wasserstein}. We used implementation for this methods proposed in in ADA framework~\cite{adalib2020}. For the Modern Office-31 experiments we compared our method to the RSDA-DANN~\cite{ringwald2021adaptiope}, SymNet~\cite{zhang2019domain}, and CAN~\cite{kang2019contrastive} methods.

\textbf{Neural optimal transport:} Following the benchmark results of the neural OT algorithms benchmark~\cite{korotin2021neural}, we choose two methods to apply DA: W2GN~\cite{KorotinW2} and MM:R~\cite{nhan2019threeplayer, korotin2021neural}.  We used Dense ICNN~\cite{KorotinW2} with three hidden layers $[64, 64, 32]$ as potentials $\phi$ and $\psi$ in for W2GN and MM:R neural OT methods. Potentials are pertained to apply invariant transformation using Adam~\cite{Ruder16} optimizer with $lr$ equal to 1e-4. Methods are trained 300 epochs with Adam optimizer and $lr$ equal to $1e-3$.

\textbf{Discrete OT solvers:} We tested  several OT solvers in semi-supervised domain adaptation settings: EMD~\cite{CourtyFTR15}, Sinkhorn~\cite{Cuturi13}, SinkhornL1L2, SinkhornLPL2, and OTLin~\cite{PerrotCFH16}. Most of these algorithms are presented in the POT framework~\cite{flamary2021pot}, which provides state-of-the-art OT solvers for DA. For experiments, we used quadratic cost $c(\mathbf{x},\mathbf{y})=\frac{1}{2}\|\mathbf{x}-\mathbf{y}\|^{2}$ for each algorithm. Regularization size equals 4.0 for Sinkhorn, SinkhornL1L2, SinkhornLPL2, and OTLin; all other hyperparameters are equal to the default, presented in POT. 

\subsection{Settings}
\label{sec-settings}
For the source domain classifier, we trained ResNet50~\cite{he2016deep} to achieve ~$90+$ accuracy on the test set of each domain in Digits and Modern Office-31. The classifier was trained using Adam~\cite{Ruder16} optimizer with a $1e-3$ learning rate. The size of latent space before the output layer was equal to $2048$. After training, we applied DA by moving mass in the latent space of the source classifier. For discrete OT baselines, the available labels were used to penalty the transport plan by building a cost matrix $M$ with $M(i, j)=0$ when $\mathbf{x}_{i}$ and $\mathbf{y}_{j}$ labels are equal and $+\infty$ if not~\cite{courty2016optimal, yan2018semi}

\begin{table}[h!]\centering
\begin{center}
\begin{small}
\begin{tabular}{l|l|l l l l}
\hline
Class & Method & M $\rightarrow$ S    & S $\rightarrow$ M   & M $\rightarrow$ U      & M $\rightarrow$ MM \\ 
\hline
Deep DA &Source    &22.0	&79.0	 &74.1		&33.5 \\
     &DANN      &19.5    &61.7   &93.8      &37.5     \\
     &C-DAN     &11.5    &79.0   &90.7      &68.4     \\
     &CDAN-E    &11.3    &77.9   &90.3      &69.6     \\
     &DAN       &16.7    &54.8   &\textbf{95.0}      &47.0     \\
     &JAN       &11.5    &57.9   &89.5      &52.9     \\
     &WDGRL     &13.8    &59.5   &85.7      &52.0     \\
     &W2GN      &20.4	 &79.9   &89.1   	&\textbf{74.1}  \\  
     &MM:R      &20.3    &80.2   &78.0      &63.8     \\

\hline
Discrete &EMD	        &21.2	&68.7	&79.2		&56.1 \\
OT      &OTLin	        &21.8	&69.9	&84.1		&62.3\\ 
        &Sinkhorn	    &21.8	&68.8	&82.1		&55.7\\
        &SinkhornLpL1	&21.8	&68.8	&84.8		&55.7\\
        &SinkhornL1L2	&21.8	&68.8	&84.8		&55.7\\
\hline
Discrete &EMD	&{23.0}	&{86.3}	&{83.1}	&{62.7}\\  
OT (\textbf{Ours})&OTLin &\underline{\textbf{25.5}}	&\underline{\textbf{88.4}}	&\underline{89.3}		&\underline{64.5} \\
        &Sinkhorn  	    &{25.5}	&{86.2}	&{83.8}		&{62.9}\\
        &SinkhornLpL1 	&{25.5}	&{86.3}	&{88.3}		&{63.0} \\
        &SinkhornL1L2 	&{25.5}	&{86.3}	&{88.3}		&{63.0} \\ 
\hline 

\end{tabular}
\end{small}
\end{center}
\caption{Accuracy$\uparrow$ of different types of DA algorithms in the latent space of source classifier on Digits datasets. For discrete OT methods semi-supervised settings with 10 labeled samples in class are demonstrated. By \textbf{bold} we denote highest accuracy, by \underline{underline} we denote highest accuracy for an discrete OT algorithms}
\label{tab:digits50}
\end{table}

To create a source fiction, we used 50 steps FSGD with $\varepsilon$ equal to $0.45$. We found that this value allows to achieve strong perturbations and, at the same time, satisfies the proposed bound on all domains. The results are presented in Tables \ref{tab:digits50}, \ref{tab:office}. All values in the tables are averaged over the ten runs with randomly chosen sets of labeled samples in the target domain. The top part of the table represents deep DA and standard discrete OT algorithm results, the bottom part of the table presents the results of discrete OT with \textit{source fiction}.

\subsection{Results}
\label{sec-results}
Our method demonstrates improvement for all adaptation tasks. The simplest EMD method is less accurate than other methods, and OTLin accuracy is slightly higher for all domains. The Sinkhorn algorithm with group lasso and Laplacian regularizations did not provide notable improvements over standard Sinkhorn. The results with only three known labels in class are presented in Appendix~\ref{tab:digits50_3}.

Our method advances the OT performance, because OT applications for mass moving assume closeness of target $\mathbb{P}$ and source $\mathbb{Q}$ distributions~\cite{lee2019hierarchical}. While OT maps are $c$-cyclically monotone, i.e., exhibit a specific structure of the map, thus, transportation ${X} \rightarrow {Y}$ via OT maps might not be applied to some problems, see~\cite[Figure 3]{CourtyFTR15} for counter-examples. Moreover, discrete OT techniques are susceptible to regularization terms~\cite{CourtyFTR15, dessein2018regularized, paty2020regularized} and require special scaling~\cite{meng2021large}. 

\begin{table*}[h!]\centering
\begin{center}
\begin{small}
\begin{tabular}{l|l| l l l l  l l}
\hline

Class &Method             & A $\rightarrow$ S    & S$\rightarrow$ A    & A $\rightarrow$ W   & W $\rightarrow$ A             & S $\rightarrow$ W   & W$\rightarrow$ S\\ 

\hline
Deep DA&Source 	&44.5	&6.2	&63.3	&70.3		    &5.5    &45.6 \\ 
        &RSDA-DANN  	&76.1	&80.8	&91.8	&90.5		    &83.1    &70.4 \\
        &SymNet 	&65.9	&86.8	&91.0	&89.2		    &82.2    &56.5 \\
        &CAN   	&\textbf{79.1}	&\textbf{91.2}	&\textbf{92.8}	&\textbf{90.9}		    &\textbf{89.7}    &\textbf{77.9} \\
\hline
Discrete &EMD	 	&38.4	&9.3	&45.2	&45.6	       &13.6	&36.7 \\
OT      &OTLin        	&37.1	&11.0	&38.7	&47.5		&6.2	&39.6 \\
        &Sinkhorn	 	&38.0	&10.1	&44.7	&45.5		&13.1	&37.2 \\
        &SinkhornLpL1 	&38.1	&10.4	&45.2	&45.3		&13.1	&37.2  \\
        &SinkhornL1L2 	&38.1	&10.4	&45.0	&45.3		&13.1	&37.2 \\
\hline
Discrete   &EMD 	&{56.8}	&{29.7}	&{64.9}	&{73.9} &\underline{40.1}	&{60.1} \\
OT(\textbf{Ours)} &OTLin 	&\underline{58.5}	&{29.8}	&\underline{65.2}	&\underline{74.4}		&{40.1}	&\underline{63.1} \\ 
           &Sinkhorn 		&{57.0}	&\underline{31.0}	&{65.2}	&{73.9}		&{39.9}	&{60.0} \\
           &SinkhornLpL1 		&{57.2}	&{31.0}	&{65.2}	&{74.0}		&{40.1}	&{60.1} \\
           &SinkhornL1L2 		&{57.2}	&{31.0}	&{65.2}	&{74.0}	&{40.1}	&{60.1} \\
\hline 

\end{tabular}
\end{small}
\end{center}
\caption{Accuracy$\uparrow$ of domain adaptation in the latent space of ResNet50 model on Modern Office-31 dataset in semi-supervised settings with the 10 known labels for each class in the target domain. By \textbf{bold} we denote highest accuracy, by \underline{underline} we denote highest accuracy for an discrete OT algorithm}
\label{tab:office}
\end{table*}

The method presented in the paper is efficient, because OT can find a map between a target and a source and, at the same time, save discriminativity, i.e., class-wise structure. In our pipeline, OT maps the unlabeled samples to the perturbed samples from the same class in the \textit{source fiction}. In practice, the distance between samples inside the classes is less than the distance between samples from different classes. The Appendix section (\wasyparagraph \ref{sec-details}) presents the adaptation results with only 3 labels per class.
\subsection{Ablation study on $\varepsilon$ parameter}

\begin{figure*}[ht]
 \centering
 \includegraphics[width=\textwidth]{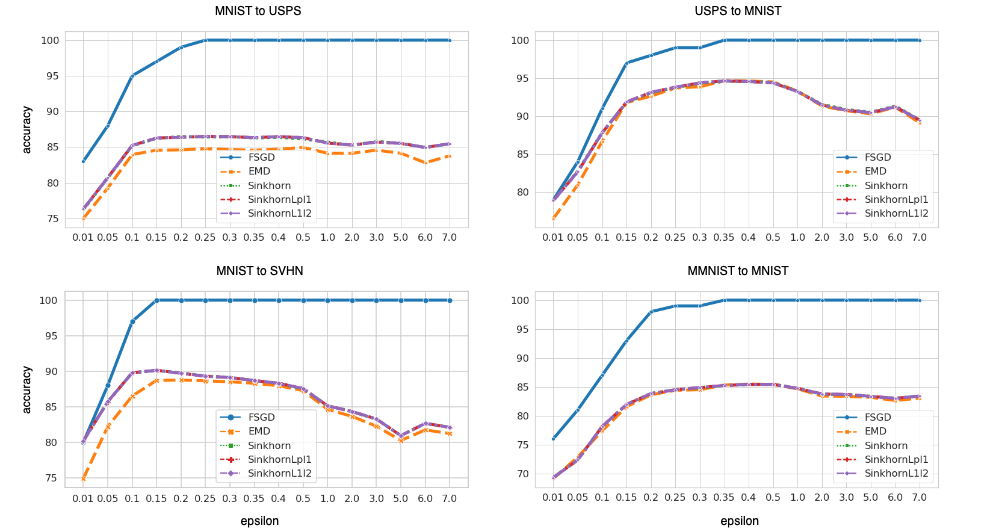}
 \caption{\label{fig:cmon} Results of ablation on $\varepsilon$ parameter for MNIST $\rightarrow$ MNIST-M (left) and SVHN $\rightarrow$ MNIST (right) datasets. FSGD denotes how accurately the source domain model classifies \textit{source fiction} samples obtained with the corresponding parameter $\varepsilon$.}
\end{figure*} 

In this section, we show the adaptation results with different $\varepsilon$ values in the FSGD algorithm.  We evaluated few transportation tasks: MNIST $\rightarrow$ USPS, USPS $\rightarrow$ MNIST, MNIST $\rightarrow$ MNIST-M, and MNIST $\rightarrow$ SVHN datasets with different values of $\varepsilon$. The value of $\frac{1}{2}\min\limits_{n_1,n_2}\|x_{n_1}-x_{n_2}\|$ is different for various  datasets, for SVHN this value is $0.74$, for MNIST is $0.29$, for MNIST-M is $10.9$ and for USPS is equal to $0.85$. We compute these values in the latent space of the ResNet50 classifier trained on the corresponding domain. For each domain, we use different values of $\varepsilon$ and then fit OT to find a map between target and \textit{source fiction}. Our results show a small trade-off between the size of perturbations and the cyclical monotonicity. Figure~\ref{fig:cmon} illustrates the adaptation accuracy with different $\varepsilon$.

When the $\varepsilon$ value becomes larger than $\frac{1}{2}\min$ distance between samples, the accuracy of adaptation decreases because the transformation becomes less $c$-cyclically monotone. With a small value of $\varepsilon$, the adaptation achieves the highest accuracy. With larger perturbations, the prediction of the classifier on perturbed samples becomes more accurate (see FSGD curve in Figure~\ref{fig:cmon}) until perturbation size achieves the $\varepsilon$ bound. With perturbations a little bit larger than the $\varepsilon$ bound, most of the samples in \textit{source fiction} are still $c$-cyclically monotone to the target and the method still works.  

\subsection{Empirical complexity}
\label{sec-complexity}
The method proposed improves discrete OT methods in domain adaptation. This is an important achievement, because discrete OT methods are a very simple and fast technique for DA compared to the approaches based on deep neural networks. Deep domain adaptation methods are complex, typically solve the challenging min-max optimization, and take hours of training. 

In practice, depending on the dataset size and source domain classifier architecture, discrete OT methods like EMD, Sinkhorn, and OTLin take 10-30 minutes to solve domain adaptation on GPU GeForce GTX-1080 (12 GB). As we created the \textit{source fiction} based only on a small number of labeled target samples, an inverse adversarial attack takes less than a minute.

\section{Discussion}
\label{sec-discussion}

\textbf{Potential impact:} The important property of discrete transport methods is their ability to be quickly applied to the new adaptation tasks. Usually, in DA, it is necessary to modify classifier architecture and train separately on each target domain~\cite{GaninL15, cdan, DAN, JAN, mmd}. OT solves domain adaptation by moving target domain samples closer to the source data, and it is unnecessary to change or fine-tune the source classifier. In this paper, we presented an algorithm that improves OT performance. Using OT, a single source domain classifier can make predictions on a range of target domains without any modification.  

Nowadays, the data is shared on separate devices and usually contains personal information, which is inefficient for data transmission and may violate data privacy. The authors of the \cite{SHOT} addresses a challenging DA setting without access to the source data for higher privacy. In our method, we adapt the source classifier to the new domain without access to the source data, using only source classifier, which can be used in various scenarios to avoid privacy issues.

\textbf{Limitations:} The main limitation of our approach is that it is necessary to have access to the labels in the target domain. To avoid the limitation of labels availability , we plan to use pseudo-labeling techniques.

\section{Conclusion}
We proposed an algorithm that modifies DA toward making target data closer to the domain formed by an adversarial attack and proved that adversarial attacks are OT maps over the datasets. We conducted various experiments on different datasets and showed that our method improves OT baselines. In some domains for discrete OT methods, adaptation with \textit{source fiction} improved accuracy by more than $10$ percent. 
  
Our method provides many straightforward applications. Neural OT is applied in the modern generative modeling approach. We plan to adapt our approach to neural transport solvers to make the method robust to out-of-sample estimation. While OT can solve DA with target shift and unbalanced classes~\cite{redko2019optimal, rakotomamonjy2020optimal}, it is promising to use \textit{source fiction} for such problems. Pseudo-labeling techniques to turn the method into a fully unsupervised method is also a promising direction. We expect our research to contribute to the development of less complicated DA techniques and open doors for the future application of $c$-cyclical monotonicity of adversarial attacks.

\textbf{Acknowledgements.} This work was supported by the Analytical Center for the Government of the Russian Federation (IGK 000000D730321P5Q0002), agreement No. 70-2021-00141.

\bibliographystyle{plain}
\bibliography{reference}

\begin{thebibliography}{10}

\bibitem{inputconvex}
Brandon Amos, Lei Xu, and J.~Zico Kolter.
\newblock Input convex neural networks.
\newblock In Doina Precup and Yee~Whye Teh, editors, {\em Proceedings of the
  34th International Conference on Machine Learning, {ICML} 2017, Sydney, NSW,
  Australia, 6-11 August 2017}, volume~70 of {\em Proceedings of Machine
  Learning Research}, pages 146--155. {PMLR}, 2017.

\bibitem{arjovsky2017wasserstein}
Martin Arjovsky, Soumith Chintala, and L{\'e}on Bottou.
\newblock Wasserstein generative adversarial networks.
\newblock In {\em International conference on machine learning}, pages
  214--223. PMLR, 2017.

\bibitem{DavidBCKPV10}
Shai Ben{-}David, John Blitzer, Koby Crammer, Alex Kulesza, Fernando Pereira,
  and Jennifer~Wortman Vaughan.
\newblock A theory of learning from different domains.
\newblock {\em Mach. Learn.}, 79(1-2):151--175, 2010.

\bibitem{DavidLLP10}
Shai Ben{-}David, Tyler Lu, Teresa Luu, and D{\'{a}}vid P{\'{a}}l.
\newblock Impossibility theorems for domain adaptation.
\newblock In Yee~Whye Teh and D.~Mike Titterington, editors, {\em Proceedings
  of the Thirteenth International Conference on Artificial Intelligence and
  Statistics, {AISTATS} 2010, Chia Laguna Resort, Sardinia, Italy, May 13-15,
  2010}, volume~9 of {\em {JMLR} Proceedings}, pages 129--136. JMLR.org, 2010.

\bibitem{bouniot2021optimal}
Quentin Bouniot, Romaric Audigier, and Angelique Loesch.
\newblock Optimal transport as a defense against adversarial attacks.
\newblock In {\em 2020 25th International Conference on Pattern Recognition
  (ICPR)}, pages 5044--5051. IEEE, 2021.

\bibitem{colombo2021automatic}
Pierre Colombo, Guillaume Staerman, Chloe Clavel, and Pablo Piantanida.
\newblock Automatic text evaluation through the lens of wasserstein
  barycenters.
\newblock {\em arXiv preprint arXiv:2108.12463}, 2021.

\bibitem{CourtyFTR15}
Nicolas Courty, R{\'{e}}mi Flamary, Devis Tuia, and Alain Rakotomamonjy.
\newblock Optimal transport for domain adaptation.
\newblock {\em CoRR}, abs/1507.00504, 2015.

\bibitem{courty2016optimal}
Nicolas Courty, R{\'e}mi Flamary, Devis Tuia, and Alain Rakotomamonjy.
\newblock Optimal transport for domain adaptation.
\newblock {\em IEEE transactions on pattern analysis and machine intelligence},
  39(9):1853--1865, 2016.

\bibitem{Cuturi13}
Marco Cuturi.
\newblock Sinkhorn distances: Lightspeed computation of optimal transport.
\newblock In Christopher J.~C. Burges, L{\'{e}}on Bottou, Zoubin Ghahramani,
  and Kilian~Q. Weinberger, editors, {\em Advances in Neural Information
  Processing Systems 26: 27th Annual Conference on Neural Information
  Processing Systems 2013. Proceedings of a meeting held December 5-8, 2013,
  Lake Tahoe, Nevada, United States}, pages 2292--2300, 2013.

\bibitem{damodaran2018deepjdot}
Bharath~Bhushan Damodaran, Benjamin Kellenberger, R{\'e}mi Flamary, Devis Tuia,
  and Nicolas Courty.
\newblock Deepjdot: Deep joint distribution optimal transport for unsupervised
  domain adaptation.
\newblock In {\em Proceedings of the European Conference on Computer Vision
  (ECCV)}, pages 447--463, 2018.

\bibitem{dessein2018regularized}
Arnaud Dessein, Nicolas Papadakis, and Jean-Luc Rouas.
\newblock Regularized optimal transport and the rot mover's distance.
\newblock {\em The Journal of Machine Learning Research}, 19(1):590--642, 2018.

\bibitem{TaghvaeiBary}
Jiaojiao Fan, Amirhossein Taghvaei, and Yongxin Chen.
\newblock Scalable computations of wasserstein barycenter via input convex
  neural networks.
\newblock {\em CoRR}, abs/2007.04462, 2020.

\bibitem{ferradans2014regularized}
Sira Ferradans, Nicolas Papadakis, Gabriel Peyr{\'e}, and Jean-Fran{\c{c}}ois
  Aujol.
\newblock Regularized discrete optimal transport.
\newblock {\em SIAM Journal on Imaging Sciences}, 7(3):1853--1882, 2014.

\bibitem{flamary2021pot}
R{\'e}mi Flamary, Nicolas Courty, Alexandre Gramfort, Mokhtar~Zahdi Alaya,
  Aur{\'e}lie Boisbunon, Stanislas Chambon, Laetitia Chapel, Adrien Corenflos,
  Kilian Fatras, Nemo Fournier, et~al.
\newblock Pot: Python optimal transport.
\newblock {\em Journal of Machine Learning Research}, 22(78):1--8, 2021.

\bibitem{GaninL15}
Yaroslav Ganin and Victor~S. Lempitsky.
\newblock Unsupervised domain adaptation by backpropagation.
\newblock In Francis~R. Bach and David~M. Blei, editors, {\em Proceedings of
  the 32nd International Conference on Machine Learning, {ICML} 2015, Lille,
  France, 6-11 July 2015}, volume~37 of {\em {JMLR} Workshop and Conference
  Proceedings}, pages 1180--1189. JMLR.org, 2015.

\bibitem{GermainHLM13}
Pascal Germain, Amaury Habrard, Fran{\c{c}}ois Laviolette, and Emilie Morvant.
\newblock A pac-bayesian approach for domain adaptation with specialization to
  linear classifiers.
\newblock In {\em Proceedings of the 30th International Conference on Machine
  Learning, {ICML} 2013, Atlanta, GA, USA, 16-21 June 2013}, volume~28 of {\em
  {JMLR} Workshop and Conference Proceedings}, pages 738--746. JMLR.org, 2013.

\bibitem{goodfellow2014explaining}
Ian~J Goodfellow, Jonathon Shlens, and Christian Szegedy.
\newblock Explaining and harnessing adversarial examples.
\newblock {\em arXiv preprint arXiv:1412.6572}, 2014.

\bibitem{mmd}
Arthur Gretton, Karsten~M. Borgwardt, Malte~J. Rasch, Bernhard Sch{\"{o}}lkopf,
  and Alexander~J. Smola.
\newblock A kernel two-sample test.
\newblock {\em J. Mach. Learn. Res.}, 13:723--773, 2012.

\bibitem{he2016deep}
Kaiming He, Xiangyu Zhang, Shaoqing Ren, and Jian Sun.
\newblock Deep residual learning for image recognition.
\newblock In {\em Proceedings of the IEEE conference on computer vision and
  pattern recognition}, pages 770--778, 2016.

\bibitem{uspsdataset}
J.~J. {Hull}.
\newblock A database for handwritten text recognition research.
\newblock {\em IEEE Transactions on Pattern Analysis and Machine Intelligence},
  16(5):550--554, 1994.

\bibitem{DBLP:journals/corr/abs-1905-02175}
Andrew Ilyas, Shibani Santurkar, Dimitris Tsipras, Logan Engstrom, Brandon
  Tran, and Aleksander Madry.
\newblock Adversarial examples are not bugs, they are features.
\newblock {\em CoRR}, abs/1905.02175, 2019.

\bibitem{kang2019contrastive}
Guoliang Kang, Lu~Jiang, Yi~Yang, and Alexander~G Hauptmann.
\newblock Contrastive adaptation network for unsupervised domain adaptation.
\newblock In {\em Proceedings of the IEEE/CVF Conference on Computer Vision and
  Pattern Recognition}, pages 4893--4902, 2019.

\bibitem{KorotinW2}
Alexander Korotin, Vage Egiazarian, Arip Asadulaev, and Evgeny Burnaev.
\newblock Wasserstein-2 generative networks.
\newblock {\em CoRR}, abs/1909.13082, 2019.

\bibitem{korotin2021neural}
Alexander Korotin, Lingxiao Li, Aude Genevay, Justin Solomon, Alexander
  Filippov, and Evgeny Burnaev.
\newblock Do neural optimal transport solvers work? a continuous wasserstein-2
  benchmark.
\newblock {\em arXiv preprint arXiv:2106.01954}, 2021.

\bibitem{KorotinBary}
Alexander Korotin, Lingxiao Li, Justin Solomon, and Evgeny Burnaev.
\newblock Continuous wasserstein-2 barycenter estimation without minimax
  optimization.
\newblock {\em CoRR}, abs/2102.01752, 2021.

\bibitem{mnisthandwrittendigit}
Yann LeCun and Corinna Cortes.
\newblock {MNIST} handwritten digit database.
\newblock 2010.

\bibitem{lee2019hierarchical}
John Lee, Max Dabagia, Eva~L Dyer, and Christopher~J Rozell.
\newblock Hierarchical optimal transport for multimodal distribution alignment.
\newblock {\em arXiv preprint arXiv:1906.11768}, 2019.

\bibitem{li2019learning}
Ruilin Li, Xiaojing Ye, Haomin Zhou, and Hongyuan Zha.
\newblock Learning to match via inverse optimal transport.
\newblock {\em Journal of machine learning research}, 20, 2019.

\bibitem{li2020representation}
Xuhong Li, Yves Grandvalet, R{\'e}mi Flamary, Nicolas Courty, and Dejing Dou.
\newblock Representation transfer by optimal transport.
\newblock {\em arXiv preprint arXiv:2007.06737}, 2020.

\bibitem{SHOT}
Jian Liang, Dapeng Hu, and Jiashi Feng.
\newblock Do we really need to access the source data? source hypothesis
  transfer for unsupervised domain adaptation.
\newblock In {\em Proceedings of the 37th International Conference on Machine
  Learning, {ICML} 2020, 13-18 July 2020, Virtual Event}, volume 119 of {\em
  Proceedings of Machine Learning Research}, pages 6028--6039. {PMLR}, 2020.

\bibitem{liu2019learning}
Ruishan Liu, Akshay Balsubramani, and James Zou.
\newblock Learning transport cost from subset correspondence.
\newblock {\em arXiv preprint arXiv:1909.13203}, 2019.

\bibitem{DAN}
Mingsheng Long, Yue Cao, Jianmin Wang, and Michael~I. Jordan.
\newblock Learning transferable features with deep adaptation networks.
\newblock In Francis~R. Bach and David~M. Blei, editors, {\em Proceedings of
  the 32nd International Conference on Machine Learning, {ICML} 2015, Lille,
  France, 6-11 July 2015}, volume~37 of {\em {JMLR} Workshop and Conference
  Proceedings}, pages 97--105. JMLR.org, 2015.

\bibitem{cdan}
Mingsheng Long, Zhangjie Cao, Jianmin Wang, and Michael~I. Jordan.
\newblock Conditional adversarial domain adaptation.
\newblock In Samy Bengio, Hanna~M. Wallach, Hugo Larochelle, Kristen Grauman,
  Nicol{\`{o}} Cesa{-}Bianchi, and Roman Garnett, editors, {\em Advances in
  Neural Information Processing Systems 31: Annual Conference on Neural
  Information Processing Systems 2018, NeurIPS 2018, December 3-8, 2018,
  Montr{\'{e}}al, Canada}, pages 1647--1657, 2018.

\bibitem{JAN}
Mingsheng Long, Han Zhu, Jianmin Wang, and Michael~I. Jordan.
\newblock Deep transfer learning with joint adaptation networks.
\newblock In Doina Precup and Yee~Whye Teh, editors, {\em Proceedings of the
  34th International Conference on Machine Learning, {ICML} 2017, Sydney, NSW,
  Australia, 6-11 August 2017}, volume~70 of {\em Proceedings of Machine
  Learning Research}, pages 2208--2217. {PMLR}, 2017.

\bibitem{MakkuvaTOL20}
Ashok~Vardhan Makkuva, Amirhossein Taghvaei, Sewoong Oh, and Jason~D. Lee.
\newblock Optimal transport mapping via input convex neural networks.
\newblock In {\em Proceedings of the 37th International Conference on Machine
  Learning, {ICML} 2020, 13-18 July 2020, Virtual Event}, volume 119 of {\em
  Proceedings of Machine Learning Research}, pages 6672--6681. {PMLR}, 2020.

\bibitem{mccann1995existence}
Robert~J McCann.
\newblock Existence and uniqueness of monotone measure-preserving maps.
\newblock {\em Duke Mathematical Journal}, 80(2):309--323, 1995.

\bibitem{meng2021large}
Cheng Meng, Yuan Ke, Jingyi Zhang, Mengrui Zhang, Wenxuan Zhong, and Ping Ma.
\newblock Large-scale optimal transport map estimation using projection
  pursuit.
\newblock {\em arXiv preprint arXiv:2106.05838}, 2021.

\bibitem{mroueh2019wasserstein}
Youssef Mroueh.
\newblock Wasserstein style transfer.
\newblock {\em arXiv preprint arXiv:1905.12828}, 2019.

\bibitem{nash2000dantzig}
John~C Nash.
\newblock The (dantzig) simplex method for linear programming.
\newblock {\em Computing in Science \& Engineering}, 2(1):29--31, 2000.

\bibitem{netzer2011reading}
Yuval Netzer, Tao Wang, Adam Coates, Alessandro Bissacco, Bo~Wu, and Andrew~Y
  Ng.
\newblock Reading digits in natural images with unsupervised feature learning.
\newblock 2011.

\bibitem{nhan2019threeplayer}
Quan~Hoang Nhan~Dam, Trung Le, Tu~Dinh Nguyen, Hung Bui, and Dinh Phung.
\newblock Threeplayer wasserstein gan via amortised duality.
\newblock In {\em Proc. of the 28th Int. Joint Conf. on Artificial Intelligence
  (IJCAI)}, 2019.

\bibitem{PapernotMG16}
Nicolas Papernot, Patrick~D. McDaniel, and Ian~J. Goodfellow.
\newblock Transferability in machine learning: from phenomena to black-box
  attacks using adversarial samples.
\newblock {\em CoRR}, abs/1605.07277, 2016.

\bibitem{DBLP:conf/ccs/PapernotMGJCS17}
Nicolas Papernot, Patrick~D. McDaniel, Ian~J. Goodfellow, Somesh Jha, Z.~Berkay
  Celik, and Ananthram Swami.
\newblock Practical black-box attacks against machine learning.
\newblock In {\em Proceedings of the 2017 {ACM} on Asia Conference on Computer
  and Communications Security, AsiaCCS 2017, Abu Dhabi, United Arab Emirates,
  April 2-6, 2017}, pages 506--519, 2017.

\bibitem{paty2020regularized}
Fran{\c{c}}ois-Pierre Paty and Marco Cuturi.
\newblock Regularized optimal transport is ground cost adversarial.
\newblock In {\em International Conference on Machine Learning}, pages
  7532--7542. PMLR, 2020.

\bibitem{PerrotCFH16}
Micha{\"{e}}l Perrot, Nicolas Courty, R{\'{e}}mi Flamary, and Amaury Habrard.
\newblock Mapping estimation for discrete optimal transport.
\newblock In Daniel~D. Lee, Masashi Sugiyama, Ulrike von Luxburg, Isabelle
  Guyon, and Roman Garnett, editors, {\em Advances in Neural Information
  Processing Systems 29: Annual Conference on Neural Information Processing
  Systems 2016, December 5-10, 2016, Barcelona, Spain}, pages 4197--4205, 2016.

\bibitem{Transfer}
Deyan Petrov and Timothy~M. Hospedales.
\newblock Measuring the transferability of adversarial examples.
\newblock {\em CoRR}, abs/1907.06291, 2019.

\bibitem{pydi2020adversarial}
Muni~Sreenivas Pydi and Varun Jog.
\newblock Adversarial risk via optimal transport and optimal couplings.
\newblock In {\em International Conference on Machine Learning}, pages
  7814--7823. PMLR, 2020.

\bibitem{rakotomamonjy2020optimal}
Alain Rakotomamonjy, R{\'e}mi Flamary, Gilles Gasso, Mokhtar~Z Alaya, Maxime
  Berar, and Nicolas Courty.
\newblock Optimal transport for conditional domain matching and label shift.
\newblock {\em arXiv preprint arXiv:2006.08161}, 2020.

\bibitem{redko2019optimal}
Ievgen Redko, Nicolas Courty, R{\'e}mi Flamary, and Devis Tuia.
\newblock Optimal transport for multi-source domain adaptation under target
  shift.
\newblock In {\em The 22nd International Conference on Artificial Intelligence
  and Statistics}, pages 849--858. PMLR, 2019.

\bibitem{redko2017theoretical}
Ievgen Redko, Amaury Habrard, and Marc Sebban.
\newblock Theoretical analysis of domain adaptation with optimal transport.
\newblock In {\em Joint European Conference on Machine Learning and Knowledge
  Discovery in Databases}, pages 737--753. Springer, 2017.

\bibitem{reich2013nonparametric}
Sebastian Reich.
\newblock A nonparametric ensemble transform method for bayesian inference.
\newblock {\em SIAM Journal on Scientific Computing}, 35(4):A2013--A2024, 2013.

\bibitem{ringwald2021adaptiope}
Tobias Ringwald and Rainer Stiefelhagen.
\newblock Adaptiope: A modern benchmark for unsupervised domain adaptation.
\newblock In {\em Proceedings of the IEEE/CVF Winter Conference on Applications
  of Computer Vision}, pages 101--110, 2021.

\bibitem{rockafellar1966characterization}
Ralph Rockafellar.
\newblock Characterization of the subdifferentials of convex functions.
\newblock {\em Pacific Journal of Mathematics}, 17(3):497--510, 1966.

\bibitem{Ruder16}
Sebastian Ruder.
\newblock An overview of gradient descent optimization algorithms.
\newblock {\em CoRR}, abs/1609.04747, 2016.

\bibitem{saenko2010adapting}
Kate Saenko, Brian Kulis, Mario Fritz, and Trevor Darrell.
\newblock Adapting visual category models to new domains.
\newblock In {\em European conference on computer vision}, pages 213--226.
  Springer, 2010.

\bibitem{Santambrogio}
Filippo Santambrogio.
\newblock Optimal transport for applied mathematicians. calculus of variations,
  pdes and modeling.
\newblock 2015.

\bibitem{DBLP:conf/iclr/SchottRBB19}
Lukas Schott, Jonas Rauber, Matthias Bethge, and Wieland Brendel.
\newblock Towards the first adversarially robust neural network model on
  {MNIST}.
\newblock In {\em 7th International Conference on Learning Representations,
  {ICLR} 2019, New Orleans, LA, USA, May 6-9, 2019}, 2019.

\bibitem{shen2018wasserstein}
Jian Shen, Yanru Qu, Weinan Zhang, and Yong Yu.
\newblock Wasserstein distance guided representation learning for domain
  adaptation.
\newblock In {\em Thirty-second AAAI conference on artificial intelligence},
  2018.

\bibitem{solomon2015convolutional}
Justin Solomon, Fernando De~Goes, Gabriel Peyr{\'e}, Marco Cuturi, Adrian
  Butscher, Andy Nguyen, Tao Du, and Leonidas Guibas.
\newblock Convolutional wasserstein distances: Efficient optimal transportation
  on geometric domains.
\newblock {\em ACM Transactions on Graphics (TOG)}, 34(4):1--11, 2015.

\bibitem{DBLP:journals/corr/SzegedyZSBEGF13}
Christian Szegedy, Wojciech Zaremba, Ilya Sutskever, Joan Bruna, Dumitru Erhan,
  Ian~J. Goodfellow, and Rob Fergus.
\newblock Intriguing properties of neural networks.
\newblock In Yoshua Bengio and Yann LeCun, editors, {\em 2nd International
  Conference on Learning Representations, {ICLR} 2014, Banff, AB, Canada, April
  14-16, 2014, Conference Track Proceedings}, 2014.

\bibitem{Taghvaei}
Amirhossein Taghvaei and Amin Jalali.
\newblock 2-wasserstein approximation via restricted convex potentials with
  application to improved training for gans.
\newblock {\em CoRR}, abs/1902.07197, 2019.

\bibitem{adalib2020}
Anne-Marie Tousch and Christophe Renaudin.
\newblock (yet) another domain adaptation library, 2020.

\bibitem{villani2008optimal}
C.~Villani.
\newblock {\em Optimal Transport: Old and New}.
\newblock Grundlehren der mathematischen Wissenschaften. Springer Berlin
  Heidelberg, 2008.

\bibitem{DBLP:conf/icml/WongSK19}
Eric Wong, Frank~R. Schmidt, and J.~Zico Kolter.
\newblock Wasserstein adversarial examples via projected sinkhorn iterations.
\newblock In {\em Proceedings of the 36th International Conference on Machine
  Learning, {ICML} 2019, 9-15 June 2019, Long Beach, California, {USA}}, pages
  6808--6817, 2019.

\bibitem{advprop}
Cihang Xie, Mingxing Tan, Boqing Gong, Jiang Wang, Alan~L. Yuille, and Quoc~V.
  Le.
\newblock Adversarial examples improve image recognition.
\newblock {\em CoRR}, abs/1911.09665, 2019.

\bibitem{xie2017adversarial}
Cihang Xie, Jianyu Wang, Zhishuai Zhang, Yuyin Zhou, Lingxi Xie, and Alan
  Yuille.
\newblock Adversarial examples for semantic segmentation and object detection.
\newblock In {\em Proceedings of the IEEE International Conference on Computer
  Vision}, pages 1369--1378, 2017.

\bibitem{yan2018semi}
Yuguang Yan, Wen Li, Hanrui Wu, Huaqing Min, Mingkui Tan, and Qingyao Wu.
\newblock Semi-supervised optimal transport for heterogeneous domain
  adaptation.
\newblock In {\em IJCAI}, volume~7, pages 2969--2975, 2018.

\bibitem{YangXLQSLL20}
Jihan Yang, Ruijia Xu, Ruiyu Li, Xiaojuan Qi, Xiaoyong Shen, Guanbin Li, and
  Liang Lin.
\newblock An adversarial perturbation oriented domain adaptation approach for
  semantic segmentation.
\newblock In {\em The Thirty-Fourth {AAAI} Conference on Artificial
  Intelligence, {AAAI} 2020, The Thirty-Second Innovative Applications of
  Artificial Intelligence Conference, {IAAI} 2020, The Tenth {AAAI} Symposium
  on Educational Advances in Artificial Intelligence, {EAAI} 2020, New York,
  NY, USA, February 7-12, 2020}, pages 12613--12620. {AAAI} Press, 2020.

\bibitem{YuanHZL19}
Xiaoyong Yuan, Pan He, Qile Zhu, and Xiaolin Li.
\newblock Adversarial examples: Attacks and defenses for deep learning.
\newblock {\em {IEEE} Trans. Neural Netw. Learning Syst.}, 30(9):2805--2824,
  2019.

\bibitem{zhang2019domain}
Yabin Zhang, Hui Tang, Kui Jia, and Mingkui Tan.
\newblock Domain-symmetric networks for adversarial domain adaptation.
\newblock In {\em Proceedings of the IEEE/CVF Conference on Computer Vision and
  Pattern Recognition}, pages 5031--5040, 2019.

\end{thebibliography}

\newpage


\appendix

\section{Appendix}
\subsection{Additional background on OT}
OT aims at finding a solution to transfer mass from one distribution to another with the least effort.
Monge's problem was the first example of the OT problem and can be formally expressed as follows: 

\begin{equation}
    \inf _{T \# \mathbb{P}=\mathbb{Q}} \int_{\Omega_{\mathbb{P}}} c(\mathbf{x}, T(\mathbf{x})) \mathbb{P}(\mathbf{x}) d \mathbf{x}
    \label{monge_2}
\end{equation}

The Monge's formulation of OT aims at finding a mapping ${T} : \Omega_{\mathbb{P}} \rightarrow \Omega_{\mathbb{Q}}$ of the two probability measures $\mathbb{P}$ and $\mathbb{Q}$ and a cost function $c: \Omega_{\mathbb{P}} \times \Omega_{\mathbb{Q}} \rightarrow\mathbb{R}_{+}$, where ${T} \# \mathbb{P}_s=\mathbb{Q}_t$ represents the mass preserving push forward operator. 
In Monge's formulation, $T$ cannot split the mass from a single point. 
The problem is that the mapping $T$ may not even exist with such constraints. 

To avoid this, Kantorovitch proposed a relaxation~\cite{villani2008optimal}. 
Instead of obtaining a mapping, the goal is to seek a joint distribution over the source and the target that determines how the mass is allocated. 
For a given cost function $c: \Omega_{\mathbb{P}} \times \Omega_{\mathbb{Q}} \rightarrow\mathbb{R}_{+}$, the primal Kantorovitch formulation can be expressed as the following problem:
\begin{equation}
    \min _{\gamma \in \gamma\left(\mathbb{P}, \mathbb{Q} \right)}\left\{\int_{\Omega_\mathbb{P} \times \Omega_\mathbb{Q}} c(\mathbf{x}, \mathbf{y})d\gamma(\mathbf{x}, \mathbf{y})=\mathbb{E}_{(\mathbf{x}, \mathbf{y}) \sim \gamma}[c(\mathbf{x}, \mathbf{y})]\right\}
\end{equation}

In primal Kantorovitch formulation, we look for a joint distribution $\gamma$ with $\mathbb{P}$ and $\mathbb{P}$ as marginals that minimize the expected transportation cost. 
If the independent distribution $\gamma(\mathbf{x}, \mathbf{y})=\mathbb{P}(\mathbf{x}) \mathbb{Q}(\mathbf{y})$ respects the constraints, linear program is convex and always has a solution for a semi-continuous $c$:
\begin{equation}
    \gamma\left(\mathbb{P}, \mathbb{Q}\right)=\left\{\gamma \in P(\Omega_\mathbb{P}, \Omega_\mathbb{Q}): \int \gamma(\mathbf{x}, \mathbf{y}) d \mathbf{y}=\mathbb{P}(\mathbf{x}), \int \gamma(\mathbf{x}, \mathbf{y}) d \mathbf{x}=\mathbb{Q}(\mathbf{y})\right\}
\end{equation}

The primal Kantorovitch formulation can also be presented in dual form as stated by the Rockafellar---Fenchel theorem~\cite{villani2008optimal}: 
\begin{equation}
    \max _{\phi \in {C}\left(\Omega_{\mathbb{P}}\right), \psi \in {C}\left(\Omega_{\mathbb{Q}}\right)}\left\{\int \phi d \mathbb{P}+\int \psi d \mathbb{Q} \mid \phi(\mathbf{x})+\psi(\mathbf{y}) \leq c(\mathbf{x}, \mathbf{y})\right\}
\end{equation}

After finding a solution to the transport problem, OT measures dissimilarity between the two distributions. 
This similarity is also called the Wasserstein distance~\cite{villani2008optimal}:
\begin{equation}
    W_{p}\left(\mathbb{P}, \mathbb{Q}\right)=\min _{\gamma \in \gamma\left(\mathbb{P}, \mathbb{Q}\right)}\left\{\int_{\Omega_{\mathbb{P}} \times \Omega_{\mathbb{Q}}} c(\mathbf{x}, \mathbf{y}) d \gamma(\mathbf{x}, \mathbf{y})\right\}^{\frac{1}{p}}
\end{equation}
where $c(\mathbf{x}, \mathbf{y})=\|\mathbf{x}-\mathbf{y}\|^{p}$ and $p > 1$. 
The Wasserstein distance encodes the geometry of the space through the optimization problem and can be used on any distribution of mass.

\subsection{Neural Optimal Transport} 

Recently, there has been a solid push to incorporate Input Convex Neural Networks~(ICNNs)~\cite{inputconvex} in OT problems.
According to Rockafellar's Theorem~\cite{rockafellar1966characterization}, every cyclically monotone mapping $g$ is contained in a sub-gradient of some convex function ${f: {X}\rightarrow \mathbb{R}}$. 
Furthermore, according to Brenier's Theorem (Theorem 1.22 of~\cite{Santambrogio}), these gradients uniquely solve the Monge problem~\eqref{monge}.
Following these theorems, a range of approaches explored ICNNs as parameterized convex potentials in dual Kantorovich problem~\cite{Taghvaei, MakkuvaTOL20}. 

Further development of this approach enabled the construction of the non-minimax Wasserstein-2 generative framework~\cite{KorotinW2} that can solve DA and Wasserstein-2 Barycenters estimation~\cite{TaghvaeiBary,KorotinBary}.
Compared to discrete OT, neural methods provide generalized OT methods that can ensure out-of-sample estimates.

\subsection{Cyclical monotonicity bound}
\label{sec-proofs}
\begin{lemma}[cyclical monotonicity of small perturbations of a dataset.]
\label{lemma-main}
Let $x_{1},\dots,x_{N}\in\mathbb{R}^{D}$ be a dataset of $N$ distinct samples. 
Let $x_{1}',\dots,x_{N}'$ be its $\leq \varepsilon$-perturbation, i.e. $\|x_n-x_n'\|\leq \varepsilon$ for all $n=1,2,\dots,N$. Assume that $\varepsilon\leq \frac{1}{2}\min\limits_{n_1,n_2}\|x_{n_1}-x_{n_2}\|$, i.e. the perturbation does not exceed $\frac{1}{2}$ of the minimal pairwise distance between samples. 
Then for all $K$ and $\overline{1,N}$ it holds:
\begin{equation}
\sum_{k=1}^{K}\frac{1}{2}\| x_{n_k}-x_{n_{k}}'\|^{2}\leq\sum_{k=1}^{K}\frac{1}{2}\|x_{n_k}-x_{n_{k+1}}'\|^{2}
\label{cyclical-m-perturb}
\end{equation}
i.e. set $(x_{1},x_{1}'),\dots,(x_{N},x_{N}')$ or, equivalently, the map $x_{k}\mapsto x_{k'}$ is cyclically monotone.
\end{lemma}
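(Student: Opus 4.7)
The plan is to expand the squared distances and reduce the inequality to a statement about inner products, then exploit cyclic telescoping together with the assumption $\varepsilon \leq \tfrac{1}{2}\min\|x_{n_1}-x_{n_2}\|$.

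First, using the identity $\|a-b\|^2 = \|a\|^2 - 2\langle a,b\rangle + \|b\|^2$, I would compute the difference of the two sides of \eqref{cyclical-m-perturb} term by term. Since indices are cyclic (with $x_{n_{K+1}}' = x_{n_1}'$), the quadratic terms $\|x_{n_k}'\|^2$ telescope away, and the bare $\|x_{n_k}\|^2$ terms cancel within each summand. The inequality is therefore equivalent to
\begin{equation}
\sum_{k=1}^{K}\bigl\langle x_{n_k},\, x_{n_k}' - x_{n_{k+1}}'\bigr\rangle \geq 0.
\end{equation}

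Next, I would write the perturbation as $x_{n_k}' = x_{n_k} + \delta_k$ with $\|\delta_k\|\leq \varepsilon$, and split the inner product sum into a ``clean'' part and a ``perturbation'' part. For the clean part, applying the identity $2\langle a, a-b\rangle = \|a\|^2-\|b\|^2+\|a-b\|^2$ and using cyclic telescoping of the first two terms yields
\begin{equation}
\sum_{k=1}^{K}\bigl\langle x_{n_k},\, x_{n_k}-x_{n_{k+1}}\bigr\rangle = \tfrac{1}{2}\sum_{k=1}^{K}\|x_{n_k}-x_{n_{k+1}}\|^{2}.
\end{equation}
For the perturbation part $\sum_k \langle x_{n_k}, \delta_k - \delta_{k+1}\rangle$, I would reindex the second piece to rewrite it as $\sum_k\langle x_{n_k}-x_{n_{k-1}}, \delta_k\rangle$ and then bound it in absolute value by $\varepsilon\sum_k \|x_{n_k}-x_{n_{k-1}}\|$ via Cauchy--Schwarz (and $\|\delta_k\|\leq\varepsilon$).

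Finally, setting $d_k := \|x_{n_k}-x_{n_{k+1}}\|$ (a cyclic quantity, so $\sum_k d_k = \sum_k\|x_{n_k}-x_{n_{k-1}}\|$), the whole inequality reduces to
\begin{equation}
\tfrac{1}{2}\sum_{k=1}^{K} d_k^{2} \;\geq\; \varepsilon\sum_{k=1}^{K} d_k.
\end{equation}
This is immediate from the hypothesis: whenever $n_k\neq n_{k+1}$ the distinctness of the dataset together with $\varepsilon\leq \tfrac{1}{2}\min_{n_1,n_2}\|x_{n_1}-x_{n_2}\|$ gives $d_k \geq 2\varepsilon$, so $\tfrac{1}{2}d_k^{2}\geq \varepsilon d_k$ term by term; the degenerate case $n_k = n_{k+1}$ contributes $d_k=0$ on both sides and also makes the corresponding inner-product term vanish.

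The main obstacle I anticipate is purely bookkeeping: tracking the cyclic indices correctly after the reindexing step, and ensuring the Cauchy--Schwarz bound on the perturbation part is precisely matched by the quadratic lower bound $d_k\geq 2\varepsilon$ that the hypothesis provides (the constant $\tfrac{1}{2}$ in $\varepsilon\leq\tfrac{1}{2}\min\|x_{n_1}-x_{n_2}\|$ is exactly what makes the reduction tight). No deeper ingredient beyond elementary Euclidean geometry and Cauchy--Schwarz is needed.
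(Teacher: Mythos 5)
Your proof is correct, but it takes a genuinely different route from the paper's. The paper argues term by term: by the triangle inequality each cross term obeys $\|x_{n_k}-x_{n_{k+1}}'\|\geq\|x_{n_k}-x_{n_{k+1}}\|-\|x_{n_{k+1}}-x_{n_{k+1}}'\|\geq 2\varepsilon-\varepsilon=\varepsilon$, so after squaring the right-hand sum is at least $K\varepsilon^{2}$, while each diagonal term satisfies $\|x_{n_k}-x_{n_k}'\|^{2}\leq\varepsilon^{2}$, so the left-hand sum is at most $K\varepsilon^{2}$ --- two lines and done. You instead expand the squares, telescope the cyclic terms to restate the claim as $\sum_{k}\langle x_{n_k},\,x_{n_k}'-x_{n_{k+1}}'\rangle\geq 0$ (the classical form of cyclical monotonicity for the quadratic cost), split off the perturbations, and balance $\tfrac{1}{2}\sum_{k}d_k^{2}$ against the Cauchy--Schwarz bound $\varepsilon\sum_{k}d_k$ on the perturbation part; your intermediate identities (the telescoping, $2\langle a,a-b\rangle=\|a\|^{2}-\|b\|^{2}+\|a-b\|^{2}$, and the cyclic reindexing of $\sum_{k}\langle x_{n_k},\delta_{k+1}\rangle$) all check out. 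Your route is longer, but it buys two things the paper's comparison does not: it covers cycles with repeated consecutive indices ($n_k=n_{k+1}$, where the paper's bound $\|x_{n_k}-x_{n_{k+1}}\|\geq 2\varepsilon$ fails and one would implicitly have to reduce to cycles of consecutively distinct points), and it yields the quantitative surplus $\sum_{k}d_k\bigl(\tfrac{1}{2}d_k-\varepsilon\bigr)\geq 0$, which makes explicit that the constant $\tfrac{1}{2}$ in the hypothesis is exactly the right one.
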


\begin{proof}Due to triangle inequality for $\|\cdot\|$, we have
\begin{equation}
    \|x_{n_k}-x_{n_{k+1}}'\|\geq \underbrace{\|x_{n_k}-x_{n_{k+1}}\|}_{\geq 2\varepsilon} - \underbrace{\|x_{n_{k+1}}-x_{n_{k+1}}'\|}_{\leq \varepsilon}=\varepsilon.
    \label{triangle-ineq}
\end{equation}
Taking the square of both sides and summing \eqref{triangle-ineq} for $k=1,2,\dots,K$ yields
\begin{equation}
    \sum_{k=1}^{K}\|x_{n_k}-x_{n_{k+1}}'\|^{2}\geq\sum_{k=1}^{K}\varepsilon^{2}=K\varepsilon^{2}.
    \label{triangle-ineq-sum}
\end{equation}

Due to the assumptions of the lemma, the following inequality holds true:
\begin{equation}\sum_{k=1}^{K}\| x_{n_k}-x_{n_{k}}'\|^{2}\leq\sum_{k=1}^{K}\varepsilon^{2}\leq K\varepsilon^{2}.
\label{upper-bound-ineq}
\end{equation}
We combine \eqref{triangle-ineq-sum} and \eqref{upper-bound-ineq} to obtain
$$\sum_{k=1}^{K}\| x_{n_k}-x_{n_{k}}'\|^{2}\leq\sum_{k=1}^{K}\|x_{n_k}-x_{n_{k+1}}'\|^{2},$$
which is equivalent to
\begin{equation}\sum_{k=1}^{K}c(x_{n_k},x_{n_{k}}')\leq\sum_{k=1}^{K}c(x_{n_k},x_{n_{k+1}}'),
\label{final-ineq}
\end{equation}
and yield $c$-cyclically monotone w.r.t. quadratic cost $c(x,y)=\frac{1}{2}\|x-y\|^{2}$.
\end{proof}

\newpage
\subsection{Experiments and Details}
\label{sec-details}
\textbf{Pre-processing:} We used standard pre-processing over samples with the dataset mean and std normalization. We resize images to 224 x 224 to train the source classifier. No augmentation was used. 

\textbf{Source classifier architectures:} For experiments presented in the main section, we used standard ResNet50~\cite{he2016deep} from \textit{torchvision} framework. We changed the output layer size to be equal to the number of classes in target and source domains. We trained source classifier to achieve ~$90+$ accuracy on the test set of each domain in Digits and Modern Office-31. The classifier was trained using Adam~\cite{Ruder16} optimizer with a $1e-3$ learning rate. The size of latent space before the output layer was equal to $2048$. After training, we applied DA by moving mass in the latent space of the source classifier. The same parameters was used to train the ResNet18 source classifier, this results presented in the next subsection. 

\textbf{Additional experiments: } In this section, we provide additional experiments using the \textit{source fiction} domain for discrete OT solvers. In tables, \textbf{bold} denotes the results of discrete solvers with \textit{source fiction} if this improves its accuracy compared to the standard settings.

Digits dataset results with 3 (Table~\ref{tab:digits50_3}) known labels per class in target domain using ResNet50 classifier.
\begin{table}[h!]\centering
\begin{center}
\begin{footnotesize}
\begin{sc}
\begin{tabular}{l|l|l|l|l|l}
\hline
Method & MNIST    & SVHN    & MNIST   & USPS    & MNIST \\ 
       & SVHN     & MNIST   & USPS    & MNIST   & M-MNIST \\ 

\hline
Source          &22.0   &79.0   &63.0   &80.0   &60.0         \\
EMD	            &21.3	&72.5	&66.1	&67.8	&44.5 \\
Sinkhorn	    &21.7	&73.0	&67.3	&68.7	&44.6 \\
SinkhornLpL1	&21.7	&73.4	&67.3	&68.8	&45.0 \\
SinkhornL1L2	&21.7	&73.4	&67.3	&68.8	&45.0 \\
OTLin	&21.8	&73.4	&67.4	&68.8	&45.0 \\
\hline
EMD	&\textbf{23.1}	&\textbf{83.5}	&\textbf{82.6}	&\textbf{86.5}	&\textbf{54.7} \\
Sinkhorn	&\textbf{23.7}	&\textbf{85.0}	&\textbf{82.6}	&\textbf{86.8}	&\textbf{54.8} \\
SinkhornLpL1	&\textbf{23.7}	&\textbf{85.2}	&\textbf{86.3}	&\textbf{86.9}	&\textbf{54.9} \\
SinkhornL1L2	&\textbf{23.8}	&\textbf{85.2}	&\textbf{86.3}	&\textbf{86.9}	&\textbf{54.9} \\
OTLin			&\textbf{23.9}	&\textbf{85.3}	&\textbf{86.3}	&\textbf{86.9}	&\textbf{55.1} \\		
\hline 

\end{tabular}
\end{sc}
\end{footnotesize}
\end{center}
\caption{Accuracy of domain adaptation by optimal transport in the latent space of ResNet50 model with only 3 known labels for each class in the target domain on Digits datasets. The top part of the table represents semi-supervised settings for discrete OT methods, settings the bottom part presents results using \textit{source fiction}.}
\label{tab:digits50_3}
\end{table}

Modern-Office dataset results with additional domain DLSR (D) (Table~\ref{tab:office_2}), with 10 known labels.
\begin{table}[h!]\centering
\begin{center}
\begin{footnotesize}
\begin{tabular}{l|l l l l l l l l l l l l l}
\hline

Method  & A       & D    & A     & S     & A     & W      & D    & S      & D    &W          & S     & W\\ 
       & D       & A    & S     & A     & W     & A      & S    & D      & W    &D          & W     & S\\ 
\hline
EMD	 &50.7	&46.2	&38.4	&9.3	&45.2	&45.6	&32.7	&16.4	&62.6	&67.1       &13.6	&36.7 \\
OTLin        &45.8	&48.0	&37.1	&11.0	&38.7	&47.5	&36.5	&4.1	&60.9	&61.8	&6.2	&39.6 \\
Sinkhorn	 &51.1	&46.3	&38.0	&10.1	&44.7	&45.5	&32.9	&16.5	&63.5	&67.1	&13.1	&37.2 \\
SinkhornLpL1 &51.1	&46.7	&38.1	&10.4	&45.2	&45.3	&33.0	&16.5	&63.8	&68.3	&13.1	&37.2  \\
SinkhornL1L2 &51.1	&46.7	&38.1	&10.4	&45.0	&45.3	&33.0	&16.5	&63.8	&68.3	&13.1	&37.2 \\
\hline
EMD 	&\textbf{70.9}	&\textbf{72.5}	&\textbf{56.8}	&\textbf{29.7}	&\textbf{64.9}	&\textbf{73.9}	&\textbf{56.6}	&\textbf{47.3}	&\textbf{75.7}	&\textbf{75.1} &\textbf{40.1}	&\textbf{60.1} \\
OTLin 	&\textbf{71.3}	&\textbf{73.6}	&\textbf{58.5}	&\textbf{29.8}	&\textbf{65.2}	&\textbf{74.4}	&\textbf{59.8}	&\textbf{47.3}	&\textbf{76.6}	&\textbf{75.1}	&\textbf{40.1}	&\textbf{63.1} \\ 
Sinkhorn 	&\textbf{70.6}	&\textbf{72.7}	&\textbf{57.0}	&\textbf{31.0}	&\textbf{65.2}	&\textbf{73.9}	&\textbf{56.7}	&\textbf{47.3}	&\textbf{77.4}	&\textbf{74.8}	&\textbf{39.9}	&\textbf{60.0} \\
SinkhornLpL1 	&\textbf{70.6}	&\textbf{72.8}	&\textbf{57.2}	&\textbf{31.0}	&\textbf{65.2}	&\textbf{74.0}	&\textbf{56.8}	&\textbf{47.3}	&\textbf{77.4}	&\textbf{75.5}	&\textbf{40.1}	&\textbf{60.1} \\
SinkhornL1L2 	&\textbf{70.6}	&\textbf{72.8}	&\textbf{57.2}	&\textbf{31.0}	&\textbf{65.2}	&\textbf{74.0}	&\textbf{56.8}	&\textbf{47.3}	&\textbf{77.4}	&\textbf{75.5}	&\textbf{40.1}	&\textbf{60.1} \\
\hline 
\end{tabular}
\end{footnotesize}
\end{center}
\caption{Results of DA in the latent space of ResNet50 model on Modern Office-31 dataset in semi-supervised settings with the the additional DLSA(D) domain. labels are 10 known for each class in the target domain.}
\label{tab:office_2}
\end{table}

Digits(Table~\ref{tab:digits18}) and Modern-Office~\ref{tab:modern18} datasets results using ResNet18 source classifier. 10 labels are known.
\begin{table}[h!]\centering
\begin{center}
\begin{footnotesize}
\begin{sc}
\begin{tabular}{l|l|l|l|l|l}
\hline
Method & MNIST    & SVHN    & MNIST   & USPS    & MNIST \\ 
       & SVHN     & MNIST   & USPS    & MNIST   & M-MNIST \\ 

\hline
Source         &22.0      &79.0       &63.0        &80.0    &60.0         \\
EMD            &15.4      &64.3       &77.0        &80.8    &70.8    \\
Sinkhorn       &16.0      &65.2       &77.9        &81.2    &70.9        \\
Sinkhorn L1Lp  &16.8      &65.7       &79.8        &85.1    &71.8        \\ 
Sinkhorn L1L2  &16.0      &65.7       &79.8        &85.1    &71.8       \\
OTLin           &16.1      &67.1       &79.8         &86.2    &71.8         \\
\hline 

EMD            &\textbf{35.1}     &\textbf{87.1}       &\textbf{85.2}      &\textbf{95.2}     &\textbf{82.3}    \\
Sinkhorn       &\textbf{38.0}     &\textbf{88.3}       &\textbf{88.3}      &\textbf{95.2}     &\textbf{83.5}        \\
Sinkhorn L1Lp  &\textbf{37.1}     &\textbf{88.3}       &\textbf{88.2}      &\textbf{95.2}     &\textbf{83.5}       \\ 
Sinkhorn L1L2  &\textbf{37.1}     &\textbf{88.3}       &\textbf{88.3}      &\textbf{95.2}     &\textbf{83.5}       \\
OTLin           &\textbf{38.0}     &\textbf{90.0}       &\textbf{88.2}      &\textbf{95.2}     &\textbf{83.5}        \\

\hline 

\end{tabular}
\end{sc}
\end{footnotesize}
\end{center}
\caption{Accuracy of domain adaptation by optimal transport in the latent space of ResNet18 model with the 10 known labels for each class in the target domain on Digits datasets. The top part of the table represents semi-supervised settings for discrete OT methods, the bottom part presents results using \textit{source fiction}.}
\label{tab:digits18}
\end{table}

\begin{table}[h!]\centering
\begin{center}
\begin{footnotesize}
\begin{sc}
\begin{tabular}{l|l|l|l|l|l|l|l|l|l|l|l}
\hline

Method & A    & D   & A   & S   & A  & W    & D    & S   & D    & S  & W\\ 
       & D    & A   & S    & A   & W  & A    & S    & D   & W    & W  & S\\

\hline
Source	        &37	&43	&25	&8	&57	&41	&20	&2	&65	&3	&27\\
EMD	            &57	&42	&27	&16	&53	&38	&32	&28	&53	&19	&37\\
Sinkhorn	    &58	&43	&27	&17	&53	&40	&32	&29	&54	&20	&38\\
SinkhornLpL1	&58	&43	&27	&17	&53	&40	&32	&29	&53	&20	&38\\
SinkhornL1L2	&58	&43	&27	&17	&53	&40	&32	&29	&53	&20	&38\\
OTLin	        &58	&43	&27	&17	&53	&40	&32	&29	&53	&20	&38\\
\hline
EMD	            &\textbf{76}	&\textbf{67}	&\textbf{51}	&\textbf{44}	&\textbf{73}	&\textbf{63}	&\textbf{59}	&\textbf{60}	&\textbf{76}	&\textbf{45}	&\textbf{61}\\
Sinkhorn	    &\textbf{77}	&\textbf{67}	&\textbf{51}	&\textbf{43}	&\textbf{75}	&\textbf{63}	&\textbf{59}	&\textbf{57}	&\textbf{77}	&\textbf{41}	&\textbf{60}\\
SinkhornLpL1	&\textbf{77}	&\textbf{67}	&\textbf{51}	&\textbf{43}	&\textbf{75}	&\textbf{63}	&\textbf{59}	&\textbf{57}	&\textbf{77}	&\textbf{41}	&\textbf{60}\\
SinkhornL1L2	&\textbf{77}	&\textbf{67}	&\textbf{51}	&\textbf{43}	&\textbf{75}	&\textbf{63}	&\textbf{59}	&\textbf{57}	&\textbf{77}	&\textbf{41}	&\textbf{60}\\
OTLin	        &\textbf{77}	&\textbf{67}	&\textbf{51}	&\textbf{43}	&\textbf{75}	&\textbf{63}	&\textbf{59}	&\textbf{57}	&\textbf{77}	&\textbf{41}	&\textbf{60}\\ \hline

\end{tabular}
\end{sc}
\end{footnotesize}
\end{center}
\caption{Results of domain adaptation in the latent space of ResNet18 model on Modern Office-31 dataset in semi-supervised settings with the 10 known labels for each class in the target domain}
\label{tab:modern18}
\end{table}

CIFAR10-STL10 adaptation task results (Table~\ref{table:cifarstl}) with ResNet18 source classifier.
\begin{table}[h!]\centering
\begin{center}
\begin{footnotesize}
\begin{sc}
\begin{tabular}{llllllll}
\hline
Method & Source & EMD & Sinkh & Sinkh L1Lp & Sinkh L1L2 &  OTLin \\ 
\hline
CIFAR$\rightarrow$STL           &37.0      &48.1                &48.1                         & 48.0                    & 48.0                  & 48.1          \\
CIFAR$\rightarrow$SF            &         &\textbf{51.1}       &\textbf{51.0}                 & \textbf{51.0}          & \textbf{51.0}          & \textbf{51.0}        \\
\hline
STL$\rightarrow$CIFAR           &75.0     &74.1                 &74.1                     & 74.1                   & 74.1                        & 74.1        \\
STL$\rightarrow$SF              &         &\textbf{76.2}       &\textbf{76.2}             &\textbf{76.2}           & \textbf{76.2}              & \textbf{76.2}        \\ 
\hline \\

\end{tabular}
\end{sc}
\end{footnotesize}
\end{center}
\caption{Results on CIFAR-10 and STL dataset in semi-supervised settings. SF is source fiction. The top table presented results for the settings with the 10 known labels for each class in the target domain, and the bottom table presents the result with the 100 known labels for each class}.
\label{table:cifarstl}
\end{table}

\end{document}